%% file: mu_main.tex
\documentclass[10pt]{article}

\input{arxiv_style}

\input{macros}

\input{project_macros}

\usepackage{color-edits} 
\addauthor{gl}{red}
\addauthor{cd}{blue}
\usepackage{enumitem} 
\usepackage{flafter}
\algrenewcommand\algorithmicrequire{\textbf{Input:}} 
\usepackage{cleveref}
\setcitestyle{numbers,square}

\title{The Sample Complexity of Policy Learning with $\mu$-Resets}
\author{Gene Li \\ \texttt{gene@ttic.edu}}
\date{\today}

\begin{document}
\maketitle

\begin{abstract}
We study policy-based reinforcement learning under the $\mu$-resets interaction protocol of Kakade and Langford~\cite{KL02}. This interaction protocol enables the learner to sample trajectories from a given exploratory reset distribution $\mu$, in addition to the starting distribution. We resolve the question raised by \cite{KLS25} on the role of policy realizability for the sample complexity of this problem. Critically, the dependence on horizon $H$ is governed by the notion of coverage assumed of the reset distribution. Under bounded all-policy concentrability, we show a $\exp\prn{\Omega(H)}$ sample complexity lower bound; with bounded pushforward concentrability, we show the dependence on horizon is tightly characterized as $\exp(\Theta(\sqrt H))$.
\end{abstract}

\section{Introduction}
Kakade and Langford's influential paper \cite{KL02} introduced the $\mu$-resets interaction protocol for reinforcement learning (RL).
\begin{definition}[$\mu$-Resets]
    The learner is given online sampling
    access to an exploratory reset distribution
    $\mu$ over states.  The learner can either roll out trajectories from the
    initial state distribution or from the reset distribution $\mu$.
\end{definition}
In this paper, we study the sample complexity of policy learning under the $\mu$-resets interaction protocol: given a policy class $\Pi$, how many trajectories does one need to collect in order to find a near-optimal policy $\widehat{\pi}$ satisfying $V^{\estpi} \ge \max_{\pi \in \Pi} V^\pi - \eps$?

Two classical algorithms have been proposed for this setting: Conservative
Policy Iteration \cite{KL02} for discounted, infinite-horizon RL and Policy
Search by Dynamic Programming (PSDP) \cite{BKSN03} for finite-horizon RL.
Both require a particularly stringent assumption on the representational
capacity of $\Pi$ called \emph{policy completeness}. Under policy
completeness, their sample complexities are polynomial in the relevant problem
parameters. Informally, policy completeness requires that the policy class is closed under the policy improvement operator. A priori, it is unclear if this assumption is fundamentally necessary, or just needed to get the analysis to work.

\textit{Quick aside on notation.} In the remainder of the paper, we focus on episodic, finite-horizon RL over MDPs $M=(\X,\A, H, P,R,d_1)$ with (large, but finite) layered state space $\X=\bigsqcup_{h=1}^H\X_h$, action set $\A$ with cardinality $\abs{\A}=A$, horizon $H$, transition function $P$, bounded rewards $R$, and initial state distribution $d_1$. Hence, the exploratory reset distribution $\mu = \crl{\mu_h}_{h=1}^H$ consists of distributions $\mu_h \in \Delta(\X_h)$. We assume $\mu_1 = d_1$ (only making the learner's job harder); in every round of interaction, the learner picks a reset layer $h \in [H]$ and roll out a trajectory from $\mu_h$. We assume the cumulative reward $\sum_h r_h$ in every trajectory is bounded in $[0,1]$.

\cite{KLS25} study the sample complexity of policy learning with $\mu$-resets without the assumption of policy completeness. In the agnostic policy learning setting, they show an information-theoretic sample complexity lower bound of $\exp(\Omega(H))$. However, they left open the setting where the policy class satisfies \emph{realizability} (that the optimal policy $\pi^\star \in \Pi$). Policy realizability is implied by policy completeness but is a much weaker (and arguably more natural) assumption. Unfortunately, the aforementioned $\exp(\Omega(H))$ lower bound crucially requires the non-realizability of $\Pi$. \cite{KLS25} additionally show that with bounded pushforward concentrability (to be defined below), PSDP achieves $\exp(\Theta(H))$ sample complexity under policy realizability via a new upper bound analysis and algorithm-dependent lower bound for PSDP. In an information-theoretic sense, the sample complexity of RL under $\mu$-resets with policy realizability has remained completely open.

\paragraph{Our results.} We characterize the sample complexity of RL under $\mu$-resets when the policy class is realizable. We first list three well-studied coverage conditions which can be used to characterize the quality of the reset distribution $\mu$.\footnote{Throughout we use the convention that $1/0 = \infty$.} In what follows, we use $d^\pi_h(\cdot) \in \Delta(\X_h)$ to denote the policy occupancy measure.

\begin{definition}[Policy-class concentrability]
The policy-class concentrability coefficient is
\[
\cconc(\mu;\Pi,M)
:=\max_{\pi\in\Pi}\max_{h\in[H]}
  \nrm*{\frac{d_h^\pi}{\mu_h}}_\infty.
\]
\end{definition}

\begin{definition}[All-policy concentrability]
Let $\Pi_{\mathrm{all}}$ denote the set of all Markov policies on $M$.  The
all-policy concentrability coefficient is
\[
\call(\mu;M)
:=\max_{\pi\in\Pi_{\mathrm{all}}}\max_{h\in[H]}
  \nrm*{\frac{d_h^\pi}{\mu_h}}_\infty.
\]
\end{definition}

\begin{definition}[Pushforward concentrability]
The pushforward concentrability coefficient is
\[
\cpush(\mu;M)
:=\max_{h\in[H-1]}\max_{x\in\X_h,\,a\in\A}
  \nrm*{\frac{P(\cdot\mid x,a)}{\mu_{h+1}}}_\infty.
\]
\end{definition}

It is straightforward from the definitions to show that
$\cconc(\mu;\Pi,M)\le\call(\mu;M)\le\cpush(\mu;M)$. When clear from the
context, we write $\cconc$, $\call$, and $\cpush$.

Our main results are stated below and summarized in \Cref{tab:results}.

\begin{table}[t]
\centering
{\normalsize
\renewcommand{\arraystretch}{1.35}
\begin{tabular}{l c c c c c}
    & $\cconc$
    & & $\call$
    & & $\cpush$ \\
    \hline
    Policy Completeness
    & \makecell{\textcolor{Green}{\cmark}\\\cite{BKSN03},\\see also \cite[Thm.~1 of][]{KLS25}}
    & $\Longrightarrow$
    & \textcolor{Green}{\cmark}
    & $\Longrightarrow$
    & \textcolor{Green}{\cmark}
    \\[0.7em]
    Policy Realizability
    & \textcolor{red}{\xmark}
    & $\Longleftarrow$
    & \makecell{\textcolor{red}{\xmark}\\Thm.~\ref{thm:all-policy-lower-bound}}
    &
    & \makecell{\textcolor{Purple}{\xmark$^\star$}\\Thms.~\ref{thm:pushforward-upper-bound}--\ref{thm:pushforward-lower-bound}}
    \\[0.7em]
    Agnostic
    & \textcolor{red}{\xmark}
    & $\Longleftarrow$
    & \textcolor{red}{\xmark}
    & $\Longleftarrow$
    & \makecell{\textcolor{red}{\xmark}\\\cite[Thm.~3 of][]{KLS25}}
\end{tabular}}
\caption{Sample complexity under $\mu$-resets. Rows denote the different representational assumptions on $\Pi$; columns denote coverage assumptions on the reset $\mu$. A \textcolor{Green}{\cmark} denotes $\poly(C,A, H\log\abs{\Pi},\eps^{-1})$ sample complexity (where $C \in \crl{\ccov, \call, \cpush}$), while \textcolor{red}{\xmark} denotes a $\exp(\Omega(H))$ lower bound. The starred entry is instead $\exp(\Theta(\sqrt H))$ for constant $A$ and $\cpush$. Arrows show implications of upper and lower bounds to weaker statements.}
\label{tab:results}
\end{table}

\begin{theorem}[Lower bound for $\call$]\label{thm:all-policy-lower-bound}
For any sufficiently large $H \in \mathbb{N}$, there exists a policy class $\Pi$ of size $2^H$; a family of MDPs $\cM$ over a state space of size $2^{O(H)}$, binary action space, and horizon $H$; and known reset distribution $\mu$; such that every $M \in \cM$ satisfies (i) $\Pi$ is realizable, and (ii) $\mu$ satisfies $\call(\mu;M) \le 4$. Any proper deterministic algorithm that returns a $1/8$-optimal policy with constant probability must sample at least $2^{\Omega(H)}$ trajectories for some MDP in $\cM$.
\end{theorem}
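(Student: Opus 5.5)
We will prove \Cref{thm:all-policy-lower-bound} with a needle-in-a-haystack construction. The policy class is indexed by binary strings $\theta\in\{0,1\}^H$, and the hidden MDP $M_{\theta^\star}$ rewards exactly the policy $\pi_{\theta^\star}$. The reset distribution $\mu$ has to be spread out enough that $\call\le 4$. The difficulty is that it must not reveal $\theta^\star$ at any single layer, so the learner must effectively search over exponentially many candidates.

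\textbf{Construction.} Each layer $\X_h$ will contain states that record a partial history. Concretely, a state at layer $h$ is a string $b_{1:h-1}\in\{0,1\}^{h-1}$ together with a small amount of extra tag information, giving $2^{O(H)}$ states in total. Transitions are deterministic given the action: taking action $a$ appends the bit $a$. For this family we have $\max_\pi d^\pi_h(x)\le 1$, while uniform $\mu_h$ over $2^{h-1}$ strings would give coverage $2^{h-1}$, which is too large. So we will not let the dynamics branch freely. Instead, each layer will use a constant number of ``live'' states plus a random \emph{hidden} label drawn by the environment, so the state space stays small in the sense that matters for coverage.

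The labels make policies distinguishable. The learner observes the label $z_h\in\{0,1\}^{O(H)}$, and $\pi_\theta$ plays $a_h=\langle \theta, z_h\rangle$ or a similar hash of $z_h$ under $\theta$. The dynamics split each layer into a ``good'' and a ``bad'' chain state, and a trajectory stays on the good chain iff $a_h=f_{\theta^\star}(z_h)$. Reward $1$ is given at layer $H$ only on the good chain.

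We then set $\mu_h$ to be uniform over $\{\text{good},\text{bad}\}\times\{\text{labels}\}$, where label marginals match those induced by any policy. Since any policy's occupancy on (chain, label) is at most the label distribution times $1$ and the chain has two states, $\call\le 2$ up to the factor $4$. Realizability holds because $\pi_{\theta^\star}$ is optimal, staying on the good chain with value $1$. Any policy disagreeing with $f_{\theta^\star}$ on a constant fraction of labels falls off the chain with probability $1-2^{-\Omega(H)}$, which gives value below $1/8$ for $\theta\ne\theta^\star$ provided the hash family has constant pairwise disagreement.

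\textbf{Information argument.} A reset at layer $h$ on the good chain followed by any policy reveals whether that policy's actions agree with $f_{\theta^\star}$ on the remaining labels. The obstacle is that one-step feedback would reveal $f_{\theta^\star}(z_h)$ bit by bit and allow linear-time learning. The construction must therefore provide no feedback until the end. We will handle this by making reward appear only at layer $H$, and by making the chain membership at intermediate layers unobservable. The good and bad chain states will be observationally identical (aliased), so the observed state is just the label, and only the terminal reward reveals anything.

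Then every trajectory, from any reset layer, yields reward $1$ only if all remaining actions agree with $f_{\theta^\star}$. Resets at late layers $h>H-k$ give information about only the last $k$ bits of behaviour. The fragment $\theta^\star$ needed for consistency across the first $H/2$ layers is learned only through trajectories that must guess $\Omega(H)$ agreements correctly. A standard argument for deterministic algorithms then applies: answer ``reward $0$'' until first success, and by a union bound over $T$ queries, each succeeding with probability $2^{-\Omega(H)}$ over a uniformly random $\theta^\star$, any algorithm with $T=2^{o(H)}$ sees all zeros w.h.p. It then outputs a $\theta$ independent of $\theta^\star$, which is $1/8$-optimal with probability $2^{-\Omega(H)}$.

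\textbf{Main obstacle.} The hard part is reconciling aliasing with Markovness and coverage. If the chain is hidden, then $\mu$ resetting to ``good'' versus ``bad'' must still be a valid state distribution, and realizability requires $\pi^\star$ to be optimal from every state in the support, including bad-chain states. We would first try this: make $\X_h$ encode (label, hidden chain bit) with the policy class depending only on the label, and choose rewards so that every action is optimal on bad states. Bad states then yield deterministic reward $0$, and realizability and $\call\le 4$ go through. If learners can exploit $\mu$ to learn bits of $\theta^\star$ at the last few layers, we will make $f_{\theta^\star}$ a pairwise-independent hash. This way, knowledge of its values on $O(H)$ late-layer labels does not determine its values on fresh early-layer labels.
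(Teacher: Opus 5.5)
There is a genuine gap. Your construction does not block backward induction from late resets, so the claim that each query succeeds with probability $2^{-\Omega(H)}$ over a random $\theta^\star$ is false.

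Reset at layer $H-1$ in your aliased chain. With probability $1/2$ you land on the good chain, and a uniformly random action earns reward $1$ with probability $1/4$ overall. Each success reveals a pair $(z_{H-1}, f_{\theta^\star}(z_{H-1}))$.

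Making $f_\theta$ pairwise independent does not help. Identification depends on the size of the class, not on pairwise predictability. There are only $2^H$ keys with constant pairwise disagreement, so each such trajectory eliminates any fixed wrong key with constant probability. A union bound then shows that $O(H)$ layer-$(H-1)$ resets pin down $\theta^\star$.

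With a plain open-loop class the same failure appears as PSDP. Reset at layer $h$, play a guess for $\theta^\star_h$ followed by the already-learned suffix, and you recover one bit from a handful of trajectories per layer. A good/bad chain whose reset covers the good chain is exactly a setting where backward induction succeeds. So your assertion that the first-half fragment can only be learned by guessing $\Omega(H)$ agreements at once does not hold.

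Separately, aliasing is not available if $\X_h$ literally encodes (label, chain bit). The learner observes the chain bit after every transition and gets per-step feedback, even from $d_1$. Hiding it requires the random-decoder rich-observation construction with $\abs{\X_h} = 2^{\Omega(H)}$, which your sketch does not specify.

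The missing idea is the \emph{poison} combination lock. It exploits the fact that $\call$ only constrains occupancies reachable from $d_1$, so $\mu$ may place mass on latent states that no policy reaches from the start.

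\begin{itemize}
\item In the paper, first-half resets are uniform over good, bad, verifier, and neutral states. Verifier and neutral are unreachable from $s_1^{\sgood}$, so they cost nothing in $\call \le 4$.
\item Second-half resets sit only on good and bad states. From these, actions are irrelevant, so late resets carry no information.
\item From a first-half reset at layer $h$, the reward law differs from the baseline (values $1/2$ and $0$ with equal probability) only if the actions match $\pi^\star$ on the entire remaining suffix through layer $H-1$.
\item That suffix includes the second-half segment, which does not affect the value from $d_1$ at all.
\end{itemize}

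This is what makes every trajectory, from every reset layer, uninformative unless it correctly guesses $\Omega(H)$ bits. Your union-bound argument needs exactly that property, and your construction lacks it.
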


Thus, if we assume the reset has bounded $\call$, we still require the stringent policy completeness assumption in order to get sample-efficient learning, and realizability doesn't buy us anything.

Next, we turn to the assumption of bounded pushforward concentrability of the reset $\mu$.

\begin{theorem}[Upper bound for $\cpush$]\label{thm:pushforward-upper-bound}
Let $M$ be an MDP and suppose $\Pi$ is a realizable class of policies. Furthermore, suppose the reset distributions $\mu$ satisfy pushforward concentrability with parameter $\cpush$. Then for every $\eps, \delta \in(0,1)$, there is an
algorithm using $\mu$-resets that, with probability at least $1-\delta$, returns an $\eps$-optimal policy
$\widehat{\pi}$ using at most
\[
O\left(
    \frac{(A\cpush)^{2\sqrt H}H^{3/2}}{\eps^2}
    \log\frac{H\abs{\Pi}}{\delta}
\right)
\quad \mathrm{trajectories.}
\]
\end{theorem}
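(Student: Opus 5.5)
The plan is a blocked version of PSDP that trades per-layer error compounding for within-block exploration cost. Fix a block length $k \approx \sqrt H$ and split $[H]$ into $m = \lceil H/k\rceil$ consecutive blocks with start layers $h_1 = 1 < h_2 < \dots < h_m$.

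\textbf{The backward pass.} Process the blocks from last to first. Suppose the output policy $\widehat\pi_{>j}$ has already been fixed on layers $\ge h_{j+1}$. For block $j$, consider the family of composite policies $\pi\circ_j \widehat\pi_{>j}$ for $\pi\in\Pi$, which act as $\pi$ on block $j$ and as $\widehat\pi_{>j}$ afterwards. Estimate the values of these policies from $\mu_{h_j}$-reset rollouts. Then set $\widehat\pi$ on block $j$ to the empirical maximizer, or better, to an element of the version space of policies that are not detectably beaten by any other member of $\Pi$.

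Realizability enters because $\pi^\star\circ_j\widehat\pi_{>j}$ is always a candidate. The output at the end is the non-stationary concatenation of the per-block choices. Uniform convergence over $\Pi$ at each of the $m$ blocks gives the $\log(H\abs{\Pi}/\delta)/\eps^2$ factor after a union bound.

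\textbf{Why the per-block choice is good off $\mu$.} Apply the performance-difference lemma at block granularity. This writes $V^{\pi^\star}-V^{\widehat\pi}$ as a sum over blocks of terms
\[
\mathbb{E}_{x\sim d^{\pi^\star}_{h_j}}\big[V^{\pi^\star\circ_j\widehat\pi_{>j}}(x)-V^{\widehat\pi_{\ge j}}(x)\big],
\]
plus a correction from replacing $\widehat\pi_{>j}$ by $\pi^\star_{>j}$ inside the first term.

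The difficulty, which is exactly why PSDP needs completeness, is that optimality on average under $\mu_{h_j}$ does not control these terms under $d^{\pi^\star}_{h_j}$ when they are signed. I would instead certify pointwise-type control within a block. To do this, evaluate each candidate not only from $\mu_{h_j}$ but from every intermediate layer $h_j\le h<h_{j+1}$. Reweight using importance weights of uniformly random action sequences of length at most $k$.

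By pushforward concentrability, any distribution reachable within $t\le k$ steps from $\mu_{h_j}$ is covered by $\mu_{h_j+t}$ with ratio at most $\cpush$ per step. Combined with the $A^t$ importance weights, this gives a per-block cost of $(A\cpush)^{k}$ in density ratio, hence $(A\cpush)^{2k}$ in variance.

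This within-block consistency test lets one discard any candidate whose block-$j$ behaviour is worse than that of $\pi^\star\circ_j\widehat\pi_{>j}$ on a set that carries non-negligible mass under some reachable distribution. The remaining error then transfers to $d^{\pi^\star}$ at the block level, not the layer level. So error compounds multiplicatively by a factor of order $\cpush$ only once per block boundary, giving an overall factor of about $\cpush^{H/k}$ on the accuracy required at each block.

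\textbf{Balancing.} The total cost then behaves like $(A\cpush)^{2k}\cdot\cpush^{2H/k}$, up to polynomial factors. The extra $H^{3/2}$ comes from $m\approx\sqrt H$ blocks times about $H$ per-layer tests, or from the union and variance accounting. Choosing $k=\sqrt H$ gives $(A\cpush)^{2\sqrt H}$.

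The step I expect to be the main obstacle is the second one. It requires making precise the claim that the within-block test, which is based only on $\mu$-resets, certifies that the signed block advantage is small under $d^{\pi^\star}_{h_j}$, and that the only residual loss propagated backwards is one factor of $\cpush$ per block.

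My first attempt would be to define the surviving set as the policies whose importance-weighted block returns, from every layer in the block, are within $\eps'$ of every other surviving policy's returns. I would then show by backward induction on $j$ that $\pi^\star$ restricted to later blocks stays ``simulated'' up to error $\eps_j\le \cpush\,\eps_{j+1}+(A\cpush)^{k}\eps'$.
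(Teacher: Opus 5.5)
Your algorithm is essentially the paper's \textsf{BlockPSDP}: uniform exploration within block $j$ from $\mu_{h_j}$, then the learned suffix, importance weighting, and an empirical maximizer over $\Pi$ restricted to the block. The analysis, however, has a genuine gap, and your choice of decomposition creates it. The block performance-difference lemma under $d^{\pi^\star}_{h_j}$ produces signed local advantages. You estimate them under $\mu_{h_j}$ but need them under $d^{\pi^\star}_{h_j}$, and a density-ratio bound cannot transport a signed quantity. Your sketched remedy does not supply this either. Importance-weighted returns from each $\mu_h$ inside the block give only averaged inequalities. These cannot certify that a surviving candidate is not worse than $\pi^\star\circ_j\widehat\pi_{>j}$ on some set carrying non-negligible mass under some reachable distribution. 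The recursion $\eps_j\le\cpush\eps_{j+1}+(A\cpush)^k\eps'$ is asserted, not derived. So the step you flag as the main obstacle is the entire crux, and it is left open. Note also that under pushforward coverage, even layerwise PSDP succeeds with realizability alone, so completeness is not what is at stake here.

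The missing idea is to measure the error under $\mu$ itself and change measure only on a pointwise nonnegative quantity. Let $e_j=\E_{x\sim\mu_{h_j}}[V^{\pi^\star}_{h_j}(x)-V^{\widehat\pi_{\ge j}}_{h_j}(x)]$ and $J_j(\pi)=\E_{x\sim\mu_{h_j}}[V^{\pi\circ_j\widehat\pi_{>j}}_{h_j}(x)]$. Inserting the hybrid $\pi^\star\circ_j\widehat\pi_{>j}$ gives
\[
e_j=\bigl[J_j(\pi^\star)-J_j(\widehat\pi)\bigr]+\E_{x\sim\nu_{j+1}}\bigl[V^{\pi^\star}_{h_{j+1}}(x)-V^{\widehat\pi_{>j}}_{h_{j+1}}(x)\bigr],
\]
where $\nu_{j+1}$ is the law at $h_{j+1}$ after running $\pi^\star$ through block $j$ from $\mu_{h_j}$. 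The signed first term lives under exactly the distribution your estimator uses. So it is at most $\eta$ by realizability and uniform convergence. The second term is pointwise nonnegative because $\pi^\star$ is optimal from every state. Moreover, $\nu_{j+1}$ is a mixture of one-step kernels, so $\nu_{j+1}\le\cpush\mu_{h_{j+1}}$ regardless of block length. Hence $e_j\le\eta+\cpush e_{j+1}$ and $V^{\pi^\star}-V^{\widehat\pi}=e_1\le m\cpush^{m-1}\eta$. This needs only plain empirical maximization, with no version space or intra-block tests.

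This also repairs your bookkeeping. The importance weights involve only actions, so the estimation cost is $A^k$. Any reachable distribution is covered with ratio $\cpush$, not $\cpush^t$, so there is no within-block ``$\cpush$ per step'' factor. By contrast, your own total $(A\cpush)^{2k}\cpush^{2H/k}$ at $k=\sqrt H$ equals $A^{2\sqrt H}\cpush^{4\sqrt H}$, not the claimed $(A\cpush)^{2\sqrt H}$. Take $n\asymp A^k\cpush^{2(m-1)}m^2\eps^{-2}\log(m|\Pi|/\delta)$ per block and $k=m=\sqrt H$. Then the total $nm$ gives the stated bound, with $H^{3/2}=m\cdot m^2$.
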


\begin{theorem}[Lower bound for $\cpush$]\label{thm:pushforward-lower-bound}
For any sufficiently large $H \in \mathbb{N}$, there exists a policy class $\Pi$ of size $2^{\sqrt H}$; a family of MDPs $\cM$ over a state space of size $2^{O(\sqrt H)}$, binary action space, and horizon $H$; and a known reset distribution $\mu$; such that every $M \in \cM$ satisfies (i) $\Pi$ is realizable, and (ii) $\mu$ satisfies $\cpush(\mu;M)\le 8$. Any proper deterministic algorithm that returns a $1/8$-optimal policy with constant probability must sample at least $2^{\Omega(\sqrt H)}$ trajectories for some MDP in $\cM$.
\end{theorem}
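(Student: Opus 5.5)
The plan is to build a family $\cM=\crl{M_\theta}_{\theta\in\crl{0,1}^{m}}$ with $m=\sqrt H$, indexed by a hidden string $\theta$. We take $\Pi=\crl{\pi_\theta}$, which has size $2^{\sqrt H}$ and where $\pi_\theta$ is the optimal policy of $M_\theta$. This gives realizability by construction. We then run an indistinguishability argument: every trajectory from any reset layer carries only $2^{-\Omega(\sqrt H)}$ information about $\theta$, while a proper $1/8$-optimal output must identify a constant fraction of the bits of $\theta$. The upper bound of \Cref{thm:pushforward-upper-bound} tells us where the tension lies. PSDP-style backward induction pays for compounding errors across many short blocks, and Monte Carlo rollouts pay $(A\cpush)^{\text{block length}}$. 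The construction should therefore make both costs $2^{\Omega(\sqrt H)}$ simultaneously by splitting the horizon into $\sqrt H$ blocks of length $\sqrt H$, with block $j$ responsible for bit $\theta_j$.

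\textbf{Within a block.} A block is a noisy combination lock of length $L=\sqrt H$. The state records a ``still on track'' flag together with a small random counter. At each step, taking the action prescribed by $\theta_j$ keeps the agent on track with probability $1/2$. Otherwise the agent falls into an absorbing ``off-track'' part of the layer, and the two sub-parts are spread over states of comparable $\mu$-mass. This is what keeps $\cpush\le 8$: every $P(\cdot\mid x,a)$ is a mixture of at most a constant number of uniform distributions over sets of constant $\mu_{h+1}$-fraction. We take $\mu_h$ uniform over $\X_h$, with $|\X_h|=2^{O(\sqrt H)}$ needed for the counters and block bookkeeping. The reward for block $j$ is paid only at the block's end, and only on the on-track branch. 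A single rollout, from $d_1$ or from any $\mu_h$, therefore sees a $\theta_j$-dependent reward with probability at most $2^{-\Omega(\sqrt H)}$. In addition, $\mu_h$ places only $2^{-\Omega(\sqrt H)}$ mass on on-track states deep inside a block, so resetting into the middle of a block does not help. To defeat backward induction across blocks, the payoff of block $j$ is coupled to blocks $j+1,\dots,m$. Concretely, the terminal reward is a parity- or threshold-type function of how many blocks were completed correctly. Learning block $j$ locally then requires already knowing a good continuation, which in turn requires the later bits. Roughly $\sqrt H$ sequential identifications are needed, and each costs $2^{\Omega(\sqrt H)}$ samples.

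\textbf{The formal argument.} We fix the algorithm and draw $\theta$ uniformly. We compare $M_\theta$ to a reference MDP $M_0$ in which every action gives the same transition law. For each trajectory, the total variation (or KL) between its law under $M_\theta$ and under $M_0$ is at most $2^{-c\sqrt H}$, uniformly over the choice of reset layer and policy. This holds because the only $\theta$-dependent event is staying on track for a full block, which has small probability under every policy not already knowing $\theta_j$. A standard hybrid/Fano argument then shows that with $T\ll 2^{c\sqrt H}$ trajectories the output $\widehat\theta$ is nearly independent of $\theta$. Finally, we check a value gap: $V^{\pi_{\theta'}}_{M_\theta}\le V^{\pi_\theta}_{M_\theta}-1/4$ whenever $\theta'$ disagrees with $\theta$ on a constant fraction of coordinates. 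This requires the rewards to be scaled so that the on-track reward is of constant size under $\pi_\theta$ itself, for example by making $\pi_\theta$'s on-track probability $1$ via a deterministic ``correct'' branch that is nonetheless spread over many states.

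\textbf{Main obstacle.} The key tension is reconciling three requirements. First, $\pi_\theta$ should achieve constant value. Second, non-knowers should see signal only with probability $2^{-\Omega(\sqrt H)}$. Third, every transition should have density at most $8$ with respect to $\mu_{h+1}$. The last requirement forbids funnelling the good policy into a small set of states that $\mu$ covers poorly. My first attempt would let the correct branch spread uniformly over a $\mu$-large set of ``encoded'' states, whose labels are pseudo-random functions of $\theta$. The state then looks uninformative under $\mu$, and only a policy that already knows $\theta$ can decode which states are on track. If that fails, I would settle for on-track probability $2^{-O(1)}$ per block for $\pi_\theta$, with gap rescaled accordingly.
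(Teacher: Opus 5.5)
\textbf{Gap: the construction does not survive constant-mass resets.} The step that fails is your claim that every reset rollout carries only $2^{-\Omega(\sqrt H)}$ information about $\theta$.

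\emph{Coverage forces constant reset mass on on-track states.} Pushforward concentrability is a \emph{lower} bound on reset mass. If some $(x,a)$ at layer $h$ sends probability $q$ into a set $S\subseteq\X_{h+1}$, then $\cpush\le 8$ forces $\mu_{h+1}(S)\ge q/8$. You need $\pi_\theta$ to stay on track with constant probability, and you yourself require transitions to land on sets of constant $\mu_{h+1}$-fraction. Hence the on-track states in every layer have constant $\mu$-mass, including the last layers of each block and layer $H$. Your assertion that $\mu_h$ puts only $2^{-\Omega(\sqrt H)}$ mass on on-track states deep in a block contradicts this.

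\emph{One bit per block gives no combinatorial hardness.} A non-knower guesses $\theta_j$ with probability $1/2$. So resetting at the start of block $j$ and trying both values reveals $\theta_j$ in $O(1)$ rollouts whenever that block's reward has constant scale. The per-trajectory $2^{-c\sqrt H}$ TV bound is therefore false for your construction.

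\emph{The coupling does not rescue it.} A parity reward makes half of all $\pi_{\theta'}$ optimal, which destroys the value gap. A threshold on the number of correctly completed blocks needs the latent state to carry a counter. But $\cpush\le 8$ lets at most $8/q$ latent states per layer each receive probability $q$ from a state-action pair, so a $\sqrt H$-valued count cannot be carried reliably. Your fallback of pseudo-randomly labelled on-track states is exactly what the random decoder already provides. It hides transitions, not rewards, so it does not address the problem.

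\textbf{What the paper does instead.} It neutralizes constant-mass resets with two mechanisms, and also reuses one $L$-bit key in every block. The shared key makes each block a genuine $2^L$-lock, with $V^{\pi^\vartheta}=0$ for $\vartheta\ne\theta$ versus $V^{\pi^\theta}=1/4$.

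\emph{Geometrically decaying reward scales.} The scales are $p_k=4^{-k}$, encoded not in the state but as the probability of a stochastic jump $\mathsf{Collect}(p_k)$ into an absorbing one/zero pair. This keeps four latent states per layer and $\cpush\le 8$. Even though $\mu$ covers the real state, the final block's signal is only $p_K/8$, so backward induction from layer $H$ already costs $2^{\Omega(K)}$ samples.

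\emph{Decoy states that cancel earlier signals.} Each nonfinal block has decoy states carrying four times the reset mass of the real states. Their wrong actions jump to $\mathsf{Collect}(p_{k+1})$, and their completed track enters the next block's real state. The scales are tuned so that from any reset in block $k<K$ we get $\Pr(r_H=1)=\frac18+\frac{p_k}{8}(1+IJ)$. This is informative only if the rollout matches the rest of block $k$ and the entire key in block $k+1$, which the learner can only hit by guessing.

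Your hybrid/TV framework, averaged over a uniformly random key, is the right shell. Together with these mechanisms it yields the $2^{\Omega(\min\{K,L\})}$ bound, but without them there is nothing for it to act on.
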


\pref{thm:pushforward-lower-bound} can be viewed as an information-theoretic strengthening of the algorithmic lower bound of \cite[Theorem 8, ][]{KLS25} that additionally builds in $\sqrt{H}$-length combination locks into the construction.

Although these bounds remain exponential, together \pref{thm:pushforward-upper-bound} and \ref{thm:pushforward-lower-bound} show that bounded pushforward concentrability changes the horizon dependence from $\exp(\Theta(H))$ to $\exp(\Theta(\sqrt H))$. This comparison has three consequences. First, realizability yields an exponential improvement over the agnostic setting, where \cite[Theorem~3 of][]{KLS25} prove a $2^{\Omega(H)}$ lower bound. Second, PSDP requires $\exp(\Theta(H))$ samples under the same realizability and pushforward assumptions~\cite[Theorems~6 and 8 of][]{KLS25}, and is therefore suboptimal. Third, our results give a quantitative separation between all-policy and pushforward concentrability, analogous to separations known in offline RL~\cite{FKSX22,JRSW24,XJ21}.

\paragraph{Statement of AI Use.} Results were obtained via GPT 5.6 Pro, with assistance from Charlie Hou and Xuchen You. We used GPT 5.6 Sol Codex to aid in writing and presentation.

\section{Lower Bound under All-Policy Concentrability}
\label{sec:all-policy-lower-bound}

In this section, we sketch the proof of
\pref{thm:all-policy-lower-bound}. We first describe the construction and
then explain why reset access does not help. We omit the formal
information-theoretic arguments, which follow the standard analysis for rich-observation combination
locks~\cite{SDMMS21,JLRSS23,KLS25}.

\paragraph{Lower bound construction.}
We use a variant of the rich-observation combination lock, which is essentially a Block MDP with enormous decoder class. See
\pref{fig:reset-poisoning} for an illustration. For simplicity, assume that $H$ is even. We let $\actionsp=\{0,1\}$ and define the policy class to be open-loop policies:
\[
    \Pi
    =
    \left\{
        \pi:
        \pi_h(x)\equiv a_h\ \text{for every }x\in\X_h,\ 
        (a_1,\ldots,a_H)\in\{0,1\}^H
    \right\}, \quad \text{thus, }\abs{\Pi}=2^H.
\]
Each MDP instance is parameterized by an optimal policy
$\pi^\star\in\Pi$. Its first $H/2$ actions determine the value from the
initial state; its actions from layer $H/2+1$ through layer $H-1$ serve only
to mask information in reset trajectories. The final action is irrelevant.

Every layer has four latent states: a good state $s_h^{\sgood}$, a
bad state $s_h^{\sbad}$, a verifier state $s_h^{\sver}$, and a neutral state
$s_h^{\snull}$. The initial latent state is $s_1^{\sgood}$.
In the first half of the MDP, the good--bad and verifier--neutral pairs form two combination locks.
The optimal action keeps the learner on the good or verifier track, whereas an
incorrect action transits the learner from $s_h^{\sgood}$ to
$s_{h+1}^{\sbad}$ and from $s_h^{\sver}$ to $s_{h+1}^{\snull}$. In the
second half, the good/neutral states advance under either action, while the
verifier and bad states form a new ``poison'' combination lock.
Formally, for every $h\in[H-1]$, the latent dynamics $\optlatp$ are
\[
\begin{aligned}
    \optlatp(\cdot\mid s_h^{\sgood},a)
    &=
    \begin{cases}
        \delta_{s_{h+1}^{\sgood}}
            &  a = \pi_h^\star \text{ and } h \le H/2,\\
        \delta_{s_{h+1}^{\sgood}}
            & h>H/2,\\
        \delta_{s_{h+1}^{\sbad}}
            & \text{otherwise},
    \end{cases}
    &\qquad
    \optlatp(\cdot\mid s_h^{\sbad},a)
    &=
    \delta_{s_{h+1}^{\sbad}},
    \\[2mm]
    \optlatp(\cdot\mid s_h^{\sver},a)
    &=
    \begin{cases}
        \delta_{s_{h+1}^{\sver}}
            & a=\pi_h^\star,\\
        \delta_{s_{h+1}^{\snull}}
            & a\ne\pi_h^\star \text{ and } h\le H/2,\\
        \delta_{s_{h+1}^{\sbad}}
            & a\ne\pi_h^\star \text{ and } h>H/2,
    \end{cases}
    &
    \optlatp(\cdot\mid s_h^{\snull},a)
    &=
    \delta_{s_{h+1}^{\snull}}.
\end{aligned}
\]

\begin{figure}[t]
\centering
\includegraphics[width=\textwidth]{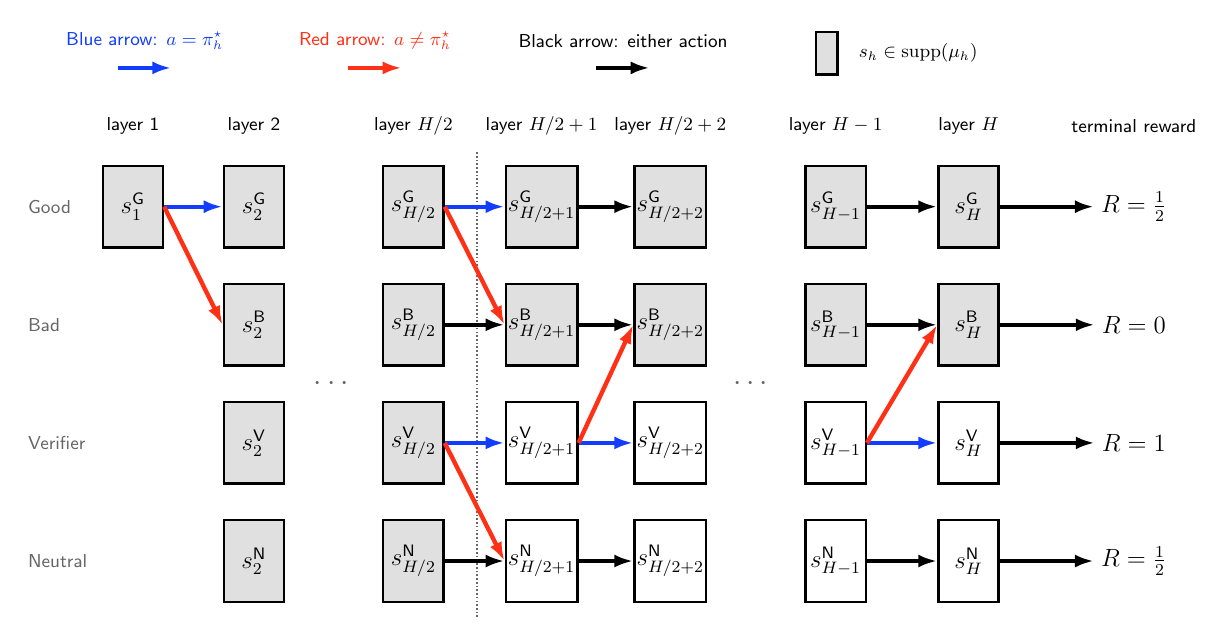}
\caption{Construction used for proof of Theorem~\ref{thm:all-policy-lower-bound}.}
\label{fig:reset-poisoning}
\end{figure}

Rewards are nonzero only in the final layer:
\[
    \optlatr(s,a)
    =
    \begin{cases}
        1
            & s=s_H^{\sver},\\
        \frac12
            & s\in\{s_H^{\sgood},s_H^{\snull}\},\\
        0
            & \text{otherwise}.
    \end{cases}
\]
With the latent dynamics fully specified, we can define the set of Block MDP instances which are parameterized by optimal policy $\optpi\in\Pi$ and decoder $\optdec\in\Phi$, as $\cM=\{M_{\optpi,\optdec}:\optpi\in\Pi,\ \optdec\in\Phi\}$. To define the decoder class, we use the standard idea from \cite{SDMMS21, KLS25}: for every layer let $\X_h$ be a large (observed) state space of size $m = 2^{cH}$ for some sufficiently large $c>0$ and define $\Phi$ to be the set of all possible decoders $\optdec: \X\to\latentsp$ which assign equal number of (observed) states to every latent state. A transition to $s \in \latentsp_h$ emits an observation according to the emission function $\emission(s) =\unif\bigl(\{x\in\X_h:\optdec(x)=s\}\bigr)$.

Finally, the reset distribution $\mu$ is defined as:
\[
    \mu_{h,\optdec}
    =
    \begin{cases}
        \emission(s_1^{\sgood}),
            & h=1,\\
        \frac14\emission(s_h^{\sgood})+\frac14\emission(s_h^{\sbad})+\frac14\emission(s_h^{\sver})+\frac14\emission(s_h^{\snull}),
            & 2\le h\le H/2,\\
        \frac12\emission(s_h^{\sgood})
        +\frac12\emission(s_h^{\sbad}),
            & H/2<h\le H.
    \end{cases}
\]
That is, the first state is emitted from the good state; for the other layers in the first half, the reset is uniform over all four latent states; the second-half resets are uniform over the good and bad states. It is clear that for every $\optpi \in \Pi$ and $\optdec \in \Phi$, $$\call(\mu_{\optdec};M_{\optpi,\optdec})\le 4.$$ Lastly, one can also check that $\optpi$ is optimal under these dynamics from any state, and it achieves value of $1/2$ from the starting distribution $\mu_1$. 

\paragraph{Why doesn't reset access allow sample-efficient learning of the optimal policy?}
Fix an instance $M_{\optpi,\optdec}$. The large decoder class $\Phi$ has statistical complexity $\log \abs{\Phi}$ which is exponential in $H$. With high probability, every observed state is a fresh, nonrepeated observation. Thus, transition data from $M_{\optpi,\optdec}$ leaks essentially no information about the underlying $\optpi$, and the learner is forced to learn $\optpi$ from reward observations.

Observe that a trajectory from the initial distribution only achieves positive reward iff the first $H/2$ actions exactly match $\pi^\star$, thus requiring $2^{\Omega(H)}$ samples to learn. Therefore, the learner must also try to utilize the trajectories rolled out from resets. We next argue that this is also doomed to fail:
\begin{itemize}
    \item Rolling out from resets $h > H/2$ are useless - no matter what policy the learner plays, they will see rewards of $1/2$ and $0$ equally often.
    \item Rolling out from resets $2 \le h \le H/2$ is a more delicate case. Fix any partial policy $\pi_{h:H-1} \in \crl{0,1}^{H-h}$, and let $Z$ be the random variable representing the reward we observe by sampling a state $\unif(\X_h)$ then rolling out with $\pi_{h:H-1}$. Then if $\pi_{h:H} \ne \pi^\star_{h:H}$, then $Z$ is $1/2$ or $0$ with equal probability. Also, if $\pi_{h:H} = \pi^\star_{h:H}$, then $Z$ is 1 with probability $1/4$, $1/2$ with probability $1/2$, and $0$ with probability $1/4$.

    This calculation elucidates the role of the ``poison'' combination lock: it effectively prevents the learner from just doing backwards induction from layer $H/2$ because this will only work if the learner already knows poison suffix $\pi^\star_{H/2:H-1}$ (which itself can only be randomly guessed with exponentially small probability).
\end{itemize}
Thus, together with the initial distribution argument, we have argued that there is no way for the learner to use $2^{O(H)}$ samples to properly identify $\optpi$ given a random instance from $\cM$.

\section{Upper Bound under Pushforward Concentrability}
\label{sec:pushforward-upper-bound}
In this section, we prove our main upper bound using \textsf{BlockPSDP}, a
blockwise variant of PSDP whose pseudocode is given in
\pref{alg:block-psdp}.

\paragraph{Algorithm.}
Fix a number of blocks $K\in[H]$ and let $L=H/K$. We may assume that $L$ is
an integer; otherwise, we can pad the last block. For each $k\in[K]$, set
\[
    \mathsf{start}_k=1+(k-1)L,
    \qquad
    \mathsf{Block}_k
    =
    \{\mathsf{start}_k,\ldots,kL\}.
\]
The algorithm works backward over these
blocks. At block $k$, it explores uniformly until the end of the block
and then follows the suffix already learned on later blocks. Here $\circ$
denotes layerwise concatenation of partial policies. We write $\Pi_h$ for the
restriction of $\Pi$ to layer $h$, and $\Pi_{\mathsf{Block}_k}$ for its
restriction to $\mathsf{Block}_k$.

\begin{algorithm}[t]
\caption{\textsf{BlockPSDP}}
\label{alg:block-psdp}
\begin{algorithmic}[1]
\Require Reset distributions $\mu=\{\mu_h\}_{h\in[H]}$, policy class
$\Pi$, number of blocks $K$, and sample size $n$.
\For{$k=K,\ldots,1$}
    \State Initialize dataset $\cD_k=\varnothing$.
    \For{$n$ times}
        \State Sample $x_{\mathsf{start}_k}\sim
        \mu_{\mathsf{start}_k}$ and
        $a_{\mathsf{Block}_k}\sim\unif(\A^L)$, generating
        $\tau_k=(x_h,a_h)_{h\in\mathsf{Block}_k}$.
        \State Let $v_k\coloneqq\sum_{h=\mathsf{start}_k}^H r_h$ be the
        return from executing
        $a_{\mathsf{Block}_k}\circ
        \widehat\pi_{\mathsf{start}_{k+1}:H}$ from
        $x_{\mathsf{start}_k}$.
        \State Set $\cD_k\gets\cD_k\cup\{(\tau_k,v_k)\}$.
    \EndFor
    \State Call policy optimization oracle:
    $\widehat\pi_{\mathsf{Block}_k}
    \in\displaystyle\argmax_{\pi_{\mathsf{Block}_k}
    \in\Pi_{\mathsf{Block}_k}}
    \frac{A^L}{n}\sum_{(\tau_k,v_k)\in\cD_k}
    \ind{a_h=\pi_h(x_h)\ \forall h\in\mathsf{Block}_k}\,v_k$.
    \State Set
    $\widehat\pi_{\mathsf{start}_k:H}
    \gets
    \widehat\pi_{\mathsf{Block}_k}
    \circ\widehat\pi_{\mathsf{start}_{k+1}:H}$.
\EndFor
\State \Return $\widehat\pi_{1:H}$.
\end{algorithmic}
\end{algorithm}

Recall that PSDP learns the policy via resets one layer at a time. Under
pushforward concentrability, its error can amplify by a factor of $\cpush$
at each layer~\cite{KLS25}. The key idea of \textsf{BlockPSDP} is to learn
$L$ consecutive layers at once by importance sampling trajectories that
explore uniformly within a block. This costs $A^L$ in sample complexity,
but error amplification occurs only across the $K-1$ boundaries between
the $K=H/L$ blocks.
Balancing these two terms gives the desired sample complexity.

\paragraph{Proof of \pref{thm:pushforward-upper-bound}.}
The analysis has two steps: estimation within each block and error
propagation across block boundaries.

\paragraph{Importance sampling guarantee.}
Fix a block $k$, and suppose we have constructed
$\widehat\pi_{\mathsf{start}_{k+1}:H}$. For
$\pi_{\mathsf{Block}_k}\in\Pi_{\mathsf{Block}_k}$, define
\[
    J_k(\pi_{\mathsf{Block}_k})
    =
    \E_{x\sim\mu_{\mathsf{start}_k}}
    \left[
        V_{\mathsf{start}_k}^{
            \,\pi_{\mathsf{Block}_k}
            \circ\widehat\pi_{\mathsf{start}_{k+1}:H}
        }(x)
    \right].
\]
The empirical objective in
\pref{alg:block-psdp} is the standard trajectory importance sampling
estimate for $J_k(\pi_{\mathsf{Block}_k})$. Standard concentration for
importance sampling~\cite{JLRSS23} implies that for any $\eta\in(0,1]$, as long as 
\begin{equation}
    n
    \ge
    c\frac{A^L}{\eta^2}
    \log\frac{2\abs{\Pi}}{\delta},
    \label{eq:block-samples}
\end{equation}
for some sufficiently large $c > 0$, then, with probability at least $1-\delta$,
\begin{equation}
    J_k\bigl(\widehat\pi_{\mathsf{Block}_k}\bigr)
    \ge
    J_k\bigl(\pi^\star_{\mathsf{Block}_k}\bigr)-\eta.
    \label{eq:block-local}
\end{equation}

\begin{lemma}[Error propagation]
\label{lem:block-propagation}
For $k\in[K]$, let
\[
    e_k
    =
    \E_{x\sim\mu_{\mathsf{start}_k}}
    \left[
        V_{\mathsf{start}_k}^{\pi^\star}(x)
        -
        V_{\mathsf{start}_k}^{\widehat\pi_{\mathsf{start}_k:H}}(x)
    \right].
\]
On the event \eqref{eq:block-local},
\begin{equation}
    e_K\le\eta,
    \qquad
    e_k\le\eta+\cpush e_{k+1}
    \quad\text{for }k<K.
    \label{eq:block-recursion}
\end{equation}
\end{lemma}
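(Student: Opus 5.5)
The plan is a performance-difference-style decomposition at the block boundary, followed by a change of measure from the state distribution induced inside the block to $\mu_{\mathsf{start}_{k+1}}$. Fix $k$ and a state $x\in\X_{\mathsf{start}_k}$. We split the gap as
\[
V_{\mathsf{start}_k}^{\pi^\star}(x)-V_{\mathsf{start}_k}^{\widehat\pi_{\mathsf{start}_k:H}}(x)
=\Bigl[V_{\mathsf{start}_k}^{\pi^\star}(x)-V_{\mathsf{start}_k}^{\pi^\star_{\mathsf{Block}_k}\circ\widehat\pi_{\mathsf{start}_{k+1}:H}}(x)\Bigr]
+\Bigl[V_{\mathsf{start}_k}^{\pi^\star_{\mathsf{Block}_k}\circ\widehat\pi_{\mathsf{start}_{k+1}:H}}(x)-V_{\mathsf{start}_k}^{\widehat\pi_{\mathsf{start}_k:H}}(x)\Bigr].
\]
Taking expectation over $x\sim\mu_{\mathsf{start}_k}$, the second bracket equals $J_k(\pi^\star_{\mathsf{Block}_k})-J_k(\widehat\pi_{\mathsf{Block}_k})$. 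Here realizability is used: $\pi^\star\in\Pi$, so $\pi^\star_{\mathsf{Block}_k}\in\Pi_{\mathsf{Block}_k}$ is a valid comparator. On the event \eqref{eq:block-local}, this term is therefore at most $\eta$.

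For $k=K$ the suffix $\widehat\pi_{\mathsf{start}_{K+1}:H}$ is empty, so the first bracket vanishes and $e_K\le\eta$.

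For $k<K$, both policies in the first bracket follow $\pi^\star$ inside the block. Hence they induce the same distribution over $x_{\mathsf{start}_{k+1}}$; call it $d^{\pi^\star}_{\mathsf{start}_{k+1}}(\cdot\mid x)$. The first bracket then equals
\[
\E_{x'\sim d^{\pi^\star}_{\mathsf{start}_{k+1}}(\cdot\mid x)}\bigl[V^{\pi^\star}_{\mathsf{start}_{k+1}}(x')-V^{\widehat\pi_{\mathsf{start}_{k+1}:H}}_{\mathsf{start}_{k+1}}(x')\bigr].
\]
The rewards collected inside the block are identical under the two policies, so they cancel. This is a conditioning argument using the Markov property.

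The integrand is nonnegative because $\pi^\star$ is optimal from every state. This holds provided we take $\pi^\star$ to be the optimal policy in $\Pi$ that is optimal at every state, which is the sense of realizability used in the construction. Nonnegativity lets us change measure with a sup-norm ratio bound.

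The key step is that $d^{\pi^\star}_{\mathsf{start}_{k+1}}(\cdot\mid x)=\sum_{x'',a}\Pr(x_{\mathsf{start}_{k+1}-1}=x'',a)\,P(\cdot\mid x'',a)$ is a mixture of one-step transitions. By the definition of $\cpush$, each component is pointwise at most $\cpush\,\mu_{\mathsf{start}_{k+1}}$, so the mixture is as well. This holds for every $x$, with no dependence on $\mu_{\mathsf{start}_k}$. Consequently the expected first bracket is at most $\cpush\,\E_{x'\sim\mu_{\mathsf{start}_{k+1}}}[\cdots]=\cpush e_{k+1}$.

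Summing the two bounds gives $e_k\le\eta+\cpush e_{k+1}$. The event \eqref{eq:block-local} must hold for every block simultaneously, which a union bound over $k$ handles in the eventual theorem.

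The main obstacle I expect is justifying the nonnegativity (pointwise optimality of $\pi^\star$) needed for the one-sided change of measure. If realizability only gave optimality from $d_1$, I would fall back on comparing to the Bellman-optimal policy. I would also check that the pushforward bound applies only to the last transition of the block, so the within-block propagation costs nothing beyond a single factor of $\cpush$. That single factor per boundary is exactly what the block design exploits.
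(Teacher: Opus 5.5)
Your proof is correct and follows the paper's argument. You add and subtract the value of $\pi^\star_{\mathsf{Block}_k}\circ\widehat\pi_{\mathsf{start}_{k+1}:H}$ and bound the within-block term by $\eta$ via \eqref{eq:block-local}, using $\pi^\star\in\Pi$. You then bound the continuation gap under the induced distribution at $\mathsf{start}_{k+1}$ by $\cpush e_{k+1}$, using pointwise optimality of $\pi^\star$ together with pushforward concentrability. Your remark that this induced distribution is a mixture of one-step transitions $P(\cdot\mid x'',a)$ from layer $\mathsf{start}_{k+1}-1$ simply spells out what the paper compresses into ``uses pushforward concentrability''.

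One small caution: your fallback of comparing to a Bellman-optimal policy outside $\Pi$ would not work, because \eqref{eq:block-local} only controls comparators in $\Pi_{\mathsf{Block}_k}$. The paper's notion of realizability (the optimal policy itself lies in $\Pi$) makes this moot.
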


\begin{proof}
For $k<K$, let $\nu_{k+1}$ denote the distribution of
$x_{\mathsf{start}_{k+1}}$ obtained by drawing
$x_{\mathsf{start}_k}\sim\mu_{\mathsf{start}_k}$ and executing
$\pi^\star_{\mathsf{Block}_k}$. Then
\begin{align*}
    e_k
    &=
    J_k\bigl(\pi^\star_{\mathsf{Block}_k}\bigr)
    -
    J_k\bigl(\widehat\pi_{\mathsf{Block}_k}\bigr)
    +
    \underbrace{
        \E_{x\sim\nu_{k+1}}
        \left[
            V_{\mathsf{start}_{k+1}}^{\pi^\star}(x)
            -
            V_{\mathsf{start}_{k+1}}^{
                \widehat\pi_{\mathsf{start}_{k+1}:H}
            }(x)
        \right]
    }_{\text{continuation gap}} \\
    &\le
    \eta
    +
    \E_{x\sim\nu_{k+1}}
    \left[
        V_{\mathsf{start}_{k+1}}^{\pi^\star}(x)
        -
        V_{\mathsf{start}_{k+1}}^{
            \widehat\pi_{\mathsf{start}_{k+1}:H}
        }(x)
    \right] \\
    &\le
    \eta+\cpush e_{k+1}.
\end{align*}
The last line uses pushforward concentrability as well as the optimality of $\pi^\star$, which ensures that the continuation gap between policies is pointwise nonnegative. For the last block there is no continuation gap, so we have
\[
    e_K
    =
    J_K\bigl(\pi^\star_{\mathsf{Block}_K}\bigr)
    -
    J_K\bigl(\widehat\pi_{\mathsf{Block}_K}\bigr)
    \le \eta.
\]
\end{proof}

\paragraph{Final guarantee.} At every block let the number of rollouts be
\[
    n
    =
    O\left(
        \frac{A^L\cpush^{2(K-1)}K^2}{\eps^2}
        \log\frac{2\abs{\Pi}}{\delta}
    \right).
\] 
Then by the importance sampling guarantee, \pref{lem:block-propagation}, and a union bound, with probability at least
$1-K\delta$, $V^{\pi^\star}-V^{\widehat\pi_{1:H}} = e_1 \le \eps$. For simplicity, take $K=L=\sqrt H$ and assume these quantities are
integers. Replacing $\delta$ by $\delta/K$ gives success probability at
least $1-\delta$ and
\[
    nK
    =
    O\left(
        \frac{(A\cpush)^{2\sqrt H}H^{3/2}}{\eps^2}
        \log\frac{H\abs{\Pi}}{\delta}
    \right).
\]
This completes the proof of \pref{thm:pushforward-upper-bound}.\qedhere

\section{Lower Bound under Pushforward Concentrability}
\label{sec:pushforward-lower-bound}

In this section, we sketch the proof of
\pref{thm:pushforward-lower-bound}. We first describe the construction and
then explain why reset access does not help. As in
\pref{sec:all-policy-lower-bound}, we omit the formal information-theoretic
argument, which follows from a standard random-decoder analysis and the
reward identity below. We use $K$ blocks of length $L$, so that $H=KL$.
For the lower bound, take $K=L=\sqrt H$ and assume these quantities are
integers.

\paragraph{Lower bound construction.}
We use a recursive rich-observation combination lock; see
\pref{fig:recursive-reset-poisoning} for an illustration. Again let
$\actionsp=\{0,1\}$. Partition the horizon into
$\mathsf{Block}_1,\ldots,\mathsf{Block}_K$, where
\[
    \mathsf{Block}_k
    :=
    \{(k-1)L+1,\ldots,kL\},
    \qquad k\in[K].
\]
An MDP instance is parameterized by a hidden key
$\theta=(\theta_1,\ldots,\theta_L)\in\{0,1\}^L$. The policy class consists
of open-loop policies which repeat the same length-$L$ action sequence in
every block. Namely, for any $\vartheta\in\{0,1\}^L$, define the policy $\pi^\vartheta$ as
\[
    \pi_h^\vartheta(x)=\vartheta_\ell
    \qquad
    \text{when }h=(k-1)L+\ell,
\]
and let $\Pi=\{\pi^\vartheta:\vartheta\in\{0,1\}^L\}$. Thus,
$\abs{\Pi}=2^L=2^{\sqrt H}$. We design the MDP instance so that
$\pi^\star=\pi^\theta$.

Every layer has four latent states: a real state
$s_h^{\mathsf R}$, a decoy state $s_h^{\mathsf D}$, a one state $s_h^1$,
and a zero state $s_h^0$. The initial latent state is $s_1^{\mathsf R}$. For $p\in[0,1]$, define
\[
    \mathsf{Collect}_{h+1}(p)
    :=
    p\delta_{s_{h+1}^1}
    +(1-p)\delta_{s_{h+1}^0},
    \qquad
    p_k:=4^{-k},
    \quad k\in[K].
\]
We now describe the dynamics. Fix
$h=(k-1)L+\ell\in[H-1]$.
\begin{itemize}
    \item The one and zero states progress to their respective next states
    under either action:
    \[
        \optlatp(\cdot\mid s_h^1,a)=\delta_{s_{h+1}^1},
        \qquad
        \optlatp(\cdot\mid s_h^0,a)=\delta_{s_{h+1}^0}.
    \]

    \item In every block, the real--zero pair forms a combination lock.
    From a real state, the correct action $\theta_\ell$ advances to the next
    real state, while a wrong action moves to the zero state. At the end of
    a nonfinal block $k$, the correct action transitions to
    $\mathsf{Collect}(p_k)$. Thus,
    \[
        \optlatp(\cdot\mid s_h^{\mathsf R},a)
        =
        \begin{cases}
            \delta_{s_{h+1}^{\mathsf R}}
                & a=\theta_\ell,\ \ell<L,\\
            \mathsf{Collect}_{h+1}(p_k)
                & k<K,\ a=\theta_L,\ \ell=L,\\
            \delta_{s_{h+1}^0}
                & \text{otherwise}.
        \end{cases}
    \]

    \item The decoy states prevent the learner from discovering the key by
    working in the opposite direction. In a nonfinal block $k$, a wrong
    action transitions to $\mathsf{Collect}(p_{k+1})$. Correct actions
    advance along the decoy track, and completing the block moves to a real
    state at the start of block $k+1$. In the final block, either action
    moves to the zero state. Formally,
    \[
        \optlatp(\cdot\mid s_h^{\mathsf D},a)
        =
        \begin{cases}
            \mathsf{Collect}_{h+1}(p_{k+1})
                & k<K,\ a\ne\theta_\ell,\\
            \delta_{s_{h+1}^{\mathsf D}}
                & k<K,\ a=\theta_\ell,\ \ell<L,\\
            \delta_{s_{h+1}^{\mathsf R}}
                & k<K,\ a=\theta_L,\ \ell=L,\\
            \delta_{s_{h+1}^0}
                & k=K.
        \end{cases}
    \]
\end{itemize}
All rewards before layer $H$ are zero. At layer $H$, the two nonzero
reward distributions are
\[
    R_H(s,a)
    =
    \begin{cases}
        1
            & s=s_H^1,\\
        \ber(p_K)
            & s=s_H^{\mathsf R},\ a=\theta_L.
    \end{cases}
\]
Every other terminal state-action pair receives reward zero.

We realize these dynamics as a rich-observation Block MDP using the same
decoder and emission construction as in 
\pref{sec:all-policy-lower-bound}. Let $m=2^{c\sqrt H}$ for a sufficiently
large constant $c>0$. For each $h\ge2$, let $\abs{\X_h}=8m$, and let the
decoder $\phi$ partition $\X_h$ into the preimages of
$s_h^{\mathsf R},s_h^{\mathsf D},s_h^0,s_h^1$ with respective sizes
$m,4m,2m,m$. A transition to a latent state emits uniformly from its
preimage. This defines the family $M_{\theta,\phi}$, indexed by the hidden
key $\theta$ and the decoder $\phi$.

We take $\mu_1$ to be the initial distribution and
$\mu_h=\unif(\X_h)$ for every $h\ge2$. If $X\sim\mu_h$ and
$S=\phi(X)$, then
\[
    \Pr(S=s_h^{\mathsf R})=\frac18,\qquad
    \Pr(S=s_h^{\mathsf D})=\frac12,\qquad
    \Pr(S=s_h^0)=\frac14,\qquad
    \Pr(S=s_h^1)=\frac18.
\]

\begin{figure}[t]
\centering
\includegraphics[width=\textwidth]
    {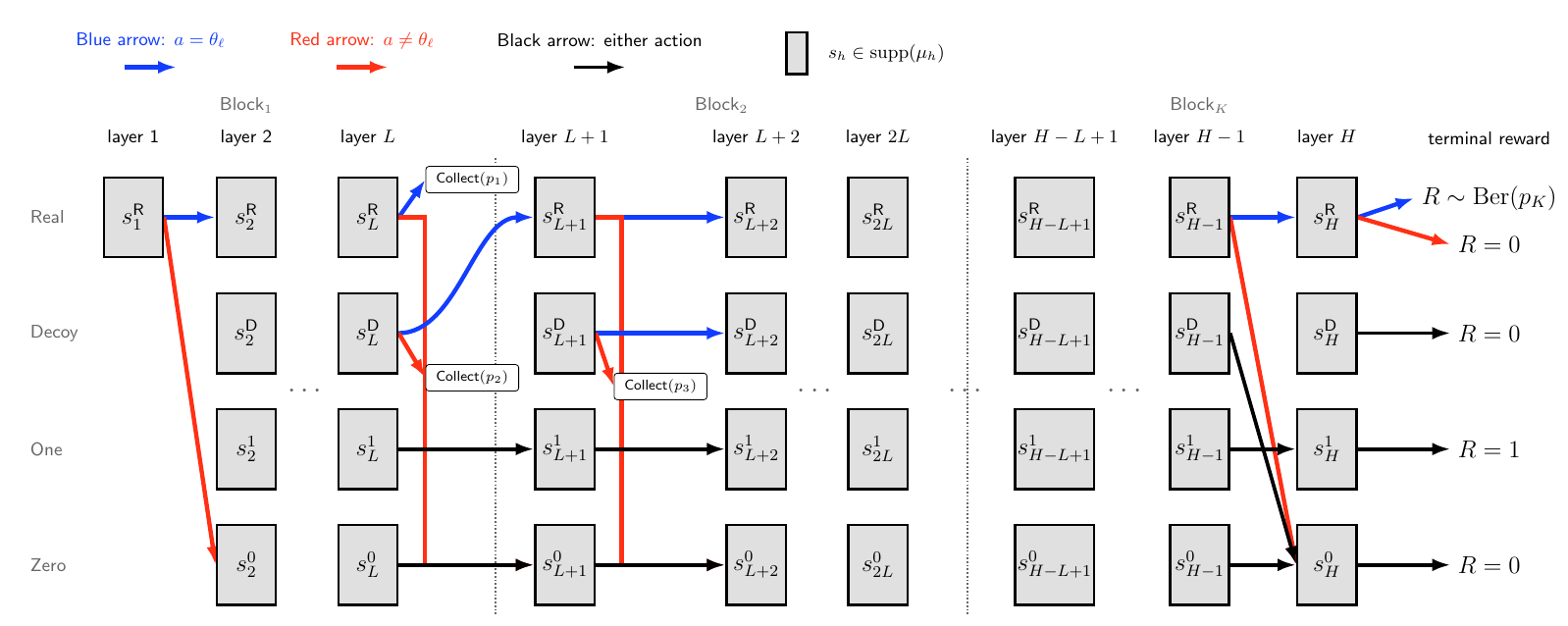}
\caption{Construction used for proof of
Theorem~\ref{thm:pushforward-lower-bound}.}
\label{fig:recursive-reset-poisoning}
\end{figure}

From this, it is immediate that every instance satisfies
\[
    \cpush(\mu;M_{\theta,\phi})\le8.
\]
Also, $\pi^\theta$ is optimal from every state. At a real state, it plays
the unique rewarding suffix. At a decoy state in a nonfinal block $k$,
it completes the current suffix and the next block, obtaining value
$p_{k+1}$. A deviation while the process remains on the decoy track
immediately transitions to $\mathsf{Collect}(p_{k+1})$, while completing
the current block leads to a real state whose value is at most
$p_{k+1}$. In the final block, every policy has value zero from a decoy
state, and actions do not affect the value from the one and zero states.
Consequently,
\[
    V^{\pi^\theta}=p_1=\frac14,
    \qquad
    V^{\pi^\vartheta}=0
    \quad\text{for every }\vartheta\ne\theta.
\]

\paragraph{Why doesn't reset access allow sample-efficient learning of the optimal policy?}
Fix an instance $M_{\theta,\phi}$. As in
\pref{sec:all-policy-lower-bound}, the large random decoder ensures that
transition observations leak essentially no information about $\theta$, so
the learner is forced to use reward observations.

Similar to the lower bound in \pref{sec:all-policy-lower-bound}, online
trajectories from the initial state distribution $\mu_1$ require
$2^{\Omega(L)}$ samples to guess the key $\theta$. We next explain why
reset trajectories do not allow the learner to recover the key one bit at
a time.

Fix a reset layer $h=(k-1)L+\ell\ge2$ in a nonfinal block $k<K$ and an
action sequence through the end of block $k+1$. Let
\[
\begin{aligned}
    I &= \ind{a_{h:kL}=\theta_{\ell:L}},
    &\qquad& \text{(action matches the remaining suffix of
        $\mathsf{Block}_k$)},\\
    J &= \ind{a_{kL+1:(k+1)L}=\theta},
    && \text{(action matches $\theta$ in $\mathsf{Block}_{k+1}$)}.
\end{aligned}
\]
The rewards are either zero or one, so we compute the probability of the reward being 1:
\begin{align*}
    \Pr\prn*{r_H=1}
    &=
    \underbrace{\frac18}_{\text{from }s_h^1}
    +
    \underbrace{\frac{p_k}{8}I}_{\text{from }s_h^{\mathsf R}}
    +
    \underbrace{\frac{p_{k+1}}{2}(1-I+IJ)}
        _{\text{from }s_h^{\mathsf D}}
    \\
    &=
    \frac18+\frac{p_k}{8}(1+IJ),
    &\text{since }p_{k+1}=\frac14p_k.
\end{align*}
Therefore, every action sequence except the one matching the remaining
suffix of $\mathsf{Block}_k$ and all of $\mathsf{Block}_{k+1}$ produces the
same reward law. This is the essential recursive poisoning property:
pushforward coverage places reset mass on the next-block real state, and
the decoy at that block poisons the new signal again at a scale four times
smaller.

What about in the final block? A reset reaches the real state with
probability $1/8$, so the gap between the correct and incorrect final
actions is $p_K/8$. Thus, standard hypothesis testing bounds require
$\Omega(p_K^{-2})=2^{\Omega(K)}$ samples to identify $\theta_L$.

Combining the two sources of information gives a
$2^{\Omega(\min\{K,L\})}$ trajectory lower bound. Taking
$K=L=\sqrt H$ yields the claimed $2^{\Omega(\sqrt H)}$ lower bound.

\end{document}

%% file: arxiv_style.tex
\usepackage{tgpagella}
\usepackage{parskip}

\usepackage[letterpaper, left=1in, right=1in, top=1in, bottom=1in]{geometry}

\usepackage[dvipsnames]{xcolor}
\usepackage[colorlinks=true, linkcolor=blue!70!black, citecolor=blue!70!black]{hyperref}
\usepackage{microtype}

\usepackage{natbib}

\usepackage{amsthm}
\usepackage{mathtools}
\usepackage{amsmath}
\usepackage{bm}
\usepackage{bbm}
\usepackage{amsfonts}
\usepackage{amssymb}

\usepackage{xpatch}
\usepackage{pifont}
\usepackage{array}
\usepackage{booktabs}
\usepackage{floatrow}
\newfloatcommand{capbtabbox}{table}[][\FBwidth]
\usepackage{blindtext}
\usepackage{caption}
\usepackage{subcaption}

\usepackage{algorithm}
\usepackage{algorithmicx}
\usepackage[noend]{algpseudocode}

%% file: macros.tex
\usepackage{bbm}

\allowdisplaybreaks

\usepackage{mathtools}

\DeclarePairedDelimiter{\abs}{\lvert}{\rvert} 

\DeclarePairedDelimiter{\crl}{\{}{\}}
\DeclarePairedDelimiter{\prn}{(}{)}
\DeclarePairedDelimiter{\nrm}{\|}{\|}

\let\Pr\undefined
 
\DeclareMathOperator{\En}{\mathbb{E}}

\DeclareMathOperator{\Pr}{\bbP}

\DeclareMathOperator*{\argmax}{argmax}

\newcommand{\ind}[1]{\mathbbm{1}\crl*{#1}}    
\newcommand{\eps}{\varepsilon}

\newcommand{\wh}[1]{\widehat{#1}}

\def\ddefloop#1{\ifx\ddefloop#1\else\ddef{#1}\expandafter\ddefloop\fi}
\def\ddef#1{\expandafter\def\csname bb#1\endcsname{\ensuremath{\mathbb{#1}}}}
\ddefloop ABCDEFGHIJKLMNOPQRSTUVWXYZ\ddefloop
\def\ddefloop#1{\ifx\ddefloop#1\else\ddef{#1}\expandafter\ddefloop\fi}
\def\ddef#1{\expandafter\def\csname b#1\endcsname{\ensuremath{\mathbf{#1}}}}
\ddefloop ABCDEFGHIJKLMNOPQRSTUVWXYZ\ddefloop
\def\ddef#1{\expandafter\def\csname c#1\endcsname{\ensuremath{\mathcal{#1}}}}
\ddefloop ABCDEFGHIJKLMNOPQRSTUVWXYZ\ddefloop
\def\ddef#1{\expandafter\def\csname h#1\endcsname{\ensuremath{\widehat{#1}}}}
\ddefloop ABCDEFGHIJKLMNOPQRSTUVWXYZabcdefghijklmnopqrsuvwxyz\ddefloop    
\def\ddef#1{\expandafter\def\csname hc#1\endcsname{\ensuremath{\widehat{\mathcal{#1}}}}}
\ddefloop ABCDEFGHIJKLMNOPQRSTUVWXYZ\ddefloop
\def\ddef#1{\expandafter\def\csname t#1\endcsname{\ensuremath{\widetilde{#1}}}}
\ddefloop ABCDEFGHIJKLMNOPQRSTUVWXYZ\ddefloop
\def\ddef#1{\expandafter\def\csname tc#1\endcsname{\ensuremath{\widetilde{\mathcal{#1}}}}}
\ddefloop ABCDEFGHIJKLMNOPQRSTUVWXYZ\ddefloop

\usepackage{tikz}

\makeatletter
\newtheorem*{rep@theorem}{\rep@title}
\newcommand{\newreptheorem}[2]{%
\newenvironment{rep#1}[1]{%
 \def\rep@title{#2 \ref*{##1}}%
 \begin{rep@theorem}}%
 {\end{rep@theorem}}}
\makeatother

{\newtheorem{lemma}{Lemma}}

{\newtheorem{theorem}{Theorem}}
\newreptheorem{theorem}{Theorem}
\newtheorem{theorem*}{Theorem}
\newtheorem{definition}{Definition}

\usepackage{prettyref}
\newcommand{\pref}[1]{\prettyref{#1}}

\newcommand{\savehyperref}[2]{\texorpdfstring{\hyperref[#1]{#2}}{#2}}
\newrefformat{eq}{\savehyperref{#1}{\textup{(\ref*{#1})}}}
\newrefformat{eqn}{\savehyperref{#1}{Equation~\ref*{#1}}}
\newrefformat{fact}{\savehyperref{#1}{Fact~\ref*{#1}}}
\newrefformat{con}{\savehyperref{#1}{Conjecture~\ref*{#1}}}
\newrefformat{lem}{\savehyperref{#1}{Lemma~\ref*{#1}}}
\newrefformat{def}{\savehyperref{#1}{Definition~\ref*{#1}}}
\newrefformat{line}{\savehyperref{#1}{line~\ref*{#1}}}
\newrefformat{thm}{\savehyperref{#1}{Theorem~\ref*{#1}}}
\newrefformat{corr}{\savehyperref{#1}{Corollary~\ref*{#1}}}
\newrefformat{sec}{\savehyperref{#1}{Section~\ref*{#1}}}
\newrefformat{app}{\savehyperref{#1}{Appendix~\ref*{#1}}}
\newrefformat{ass}{\savehyperref{#1}{Assumption~\ref*{#1}}}
\newrefformat{ex}{\savehyperref{#1}{Example~\ref*{#1}}}
\newrefformat{fig}{\savehyperref{#1}{Figure~\ref*{#1}}}
\newrefformat{alg}{\savehyperref{#1}{Algorithm~\ref*{#1}}}
\newrefformat{rem}{\savehyperref{#1}{Remark~\ref*{#1}}}
\newrefformat{conj}{\savehyperref{#1}{Conjecture~\ref*{#1}}}
\newrefformat{prop}{\savehyperref{#1}{Proposition~\ref*{#1}}}
\newrefformat{proto}{\savehyperref{#1}{Protocol~\ref*{#1}}}
\newrefformat{prob}{\savehyperref{#1}{Problem~\ref*{#1}}}
\newrefformat{claim}{\savehyperref{#1}{Claim~\ref*{#1}}}
\newrefformat{tab}{\savehyperref{#1}{Table~\ref*{#1}}}

\usepackage[shortlabels]{enumitem} 
\usepackage{makecell}
\usepackage{comment}
\usepackage{pifont}
\usepackage{caption}
\usepackage[dvipsnames]{xcolor}
\newcommand{\cmark}{\ding{51}}%
\newcommand{\xmark}{\ding{55}}%
\usepackage{nicefrac} 

\usepackage{tikz} 
\usetikzlibrary{arrows.meta, positioning}

%% file: project_macros.tex
\newcommand{\ccov}{C_\mathsf{cov}}
\newcommand{\cconc}{C_\mathsf{conc}}
\newcommand{\call}{C_\mathsf{all}}

\newcommand{\poly}{\mathrm{poly}}

\providecommand{\X}{\cX}
\providecommand{\A}{\cA}
\providecommand{\E}{\En}

\newcommand{\unif}{\mathrm{Unif}}
\newcommand{\ber}{\mathrm{Ber}}

\newcommand{\cpush}{C_\mathsf{push}}

\newcommand{\estpi}{{\wh{\pi}}}

\newcommand{\optpi}{\pi^\star}

\newcommand{\optlatp}{P_\mathsf{lat}}
\newcommand{\optlatr}{R_\mathsf{lat}}

\newcommand{\optdec}{\phi}

\newcommand{\actionsp}{\cA}
\newcommand{\latentsp}{\cS}
\newcommand{\emission}{\psi}

\newcommand{\sgood}{\mathsf{G}}
\newcommand{\sbad}{\mathsf{B}}
\newcommand{\sver}{\mathsf{V}}
\newcommand{\snull}{\mathsf{N}}


%% file: mu_main.bbl
\begin{thebibliography}{99}

\bibitem[BKSN03]{BKSN03}
J.~A. Bagnell, S.~M. Kakade, J.~G. Schneider, and A.~Y. Ng.
\newblock Policy search by dynamic programming.
\newblock In \emph{Advances in Neural Information Processing Systems 16}, pages 831--838, 2003.

\bibitem[FKSX22]{FKSX22}
D.~J. Foster, A.~Krishnamurthy, D.~Simchi-Levi, and Y.~Xu.
\newblock Offline reinforcement learning: Fundamental barriers for value function approximation.
\newblock In \emph{Proceedings of the 35th Conference on Learning Theory}, volume 178 of \emph{Proceedings of Machine Learning Research}, page 3489, 2022.
\newblock Full version: arXiv:2111.10919.

\bibitem[JLR+23]{JLRSS23}
Z.~Jia, G.~Li, A.~Rakhlin, A.~Sekhari, and N.~Srebro.
\newblock When is agnostic reinforcement learning statistically tractable?
\newblock In \emph{Advances in Neural Information Processing Systems 36}, 2023.

\bibitem[JRSW24]{JRSW24}
Z.~Jia, A.~Rakhlin, A.~Sekhari, and C.-Y.~Wei.
\newblock Offline reinforcement learning: Role of state aggregation and trajectory data.
\newblock In \emph{Proceedings of the 37th Conference on Learning Theory}, volume 247 of \emph{Proceedings of Machine Learning Research}, pages 2644--2719, 2024.

\bibitem[KL02]{KL02}
S.~M. Kakade and J.~Langford.
\newblock Approximately optimal approximate reinforcement learning.
\newblock In \emph{Proceedings of the 19th International Conference on Machine Learning}, pages 267--274, 2002.

\bibitem[KLS25]{KLS25}
A.~Krishnamurthy, G.~Li, and A.~Sekhari.
\newblock The role of environment access in agnostic reinforcement learning.
\newblock In \emph{Proceedings of the 38th Conference on Learning Theory}, volume 291 of \emph{Proceedings of Machine Learning Research}, pages 3405--3406, 2025.
\newblock Full version: arXiv:2504.05405.

\bibitem[SDM+21]{SDMMS21}
A.~Sekhari, C.~Dann, M.~Mohri, Y.~Mansour, and K.~Sridharan.
\newblock Agnostic reinforcement learning with low-rank MDPs and rich observations.
\newblock In \emph{Advances in Neural Information Processing Systems 34}, 2021.

\bibitem[XJ21]{XJ21}
T.~Xie and N.~Jiang.
\newblock Batch value-function approximation with only realizability.
\newblock In \emph{Proceedings of the 38th International Conference on Machine Learning}, volume 139 of \emph{Proceedings of Machine Learning Research}, pages 11404--11413, 2021.

\end{thebibliography}
